\newtheorem{theorem}{Theorem}
\newcommand{\N}{\mathcal{N}}
\newcommand{\I}{\mathcal{I}}
\newcommand{\tr}{\text{tr}}
\newcommand{\cov}{\text{cov}}
\newcommand{\eps}{\varepsilon}
\renewcommand{\v}[1]{\mathbf{#1}}
\begin{document}

\title{Gaussian Process Random Fields}

\author{
David A. Moore and Stuart J. Russell\\
Computer Science Division\\
University of California, Berkeley\\
Berkeley, CA 94709\\
\texttt{\{dmoore, russell\}@cs.berkeley.edu} 
}

\maketitle

\begin{abstract}
  Gaussian processes have been successful in both supervised and
  unsupervised machine learning tasks, but their computational
  complexity has constrained practical applications. We introduce a
  new approximation for large-scale Gaussian processes, the Gaussian
  Process Random Field (GPRF), in which local GPs are coupled via
  pairwise potentials. The GPRF likelihood is a simple, tractable, and
  parallelizeable approximation to the full GP marginal likelihood,
  enabling latent variable modeling and hyperparameter selection on
  large datasets.  We demonstrate its effectiveness on synthetic
  spatial data as well as a real-world application to seismic event
  location.
\end{abstract}

\section{Introduction}

Many machine learning tasks can be framed as learning a function given
noisy information about its inputs and outputs. In
regression and classification, we are given inputs and asked to
predict the outputs; by contrast, in latent variable modeling we are
given a set of outputs and asked to reconstruct the inputs that could
have produced them. Gaussian processes (GPs) are a flexible class of
probability distributions on functions that allow us to approach
function-learning problems from an appealingly principled and clean
Bayesian perspective. Unfortunately, the time complexity of exact GP
inference is $O(n^3)$, where $n$ is the number of data points. This makes exact GP calculations infeasible for
real-world data sets with $n > 10000$. 

Many approximations have been
proposed to escape this limitation. One particularly simple
approximation is to partition the input space into smaller blocks,
replacing a single large GP with a multitude of local ones. This gains
tractability at the price of a potentially severe independence assumption. 

In this paper we relax the strong independence assumptions of
independent local GPs, proposing instead a Markov random field (MRF) of
local GPs, which we call a Gaussian Process Random Field (GPRF). A
GPRF couples local models via pairwise potentials that incorporate
covariance information. This yields a surrogate for the full GP
marginal likelihood that is simple to implement and 
can be tractably evaluated and optimized on
large datasets, while still enforcing a smooth covariance
structure. The task of approximating the marginal likelihood is
motivated by unsupervised applications such as the GP latent variable
model \cite{lawrence2004gaussian}, but examining the predictions made by our model also
yields a novel interpretation of the Bayesian Committee Machine \cite{tresp2000bayesian}.

We begin by reviewing GPs and MRFs, and
some existing approximation methods for large-scale GPs. In
Section~\ref{sec:gprf} we present the GPRF objective and examine its
properties as an approximation to the full GP marginal likelihood.  We
then evaluate it on synthetic data as well as an application to seismic event location. 
\vspace{-0.2cm}
\section{Background}
\vspace{-0.2cm}
\subsection{Gaussian processes}
\vspace{-0.2cm}
Gaussian processes \cite{rasmussen2006} are
distributions on real-valued functions. GPs are
parameterized by a mean function $\mu_\theta(\v{x})$, typically assumed
without loss of generality to be $\mu(\v{x})=0$, and a covariance function (sometimes called a kernel)
$k_\theta(\v{x}, \v{x}')$,  with hyperparameters $\theta$. A common choice is the squared
exponential covariance, $k_{SE}(\v{x}, \v{x}') =
\sigma^2_f\exp\left(-\frac{1}{2}\|\v{x}-\v{x}'\|^2 / \ell^2\right)$, with
hyperparameters $\sigma^2_f$  and $\ell$ specifying respectively a
prior variance and correlation lengthscale. 

We say that a random function $f(x)$ is Gaussian process distributed if, for
any $n$ input points $X$, the vector of function values $\v{f} = f(X)$ is
multivariate Gaussian, $\v{f} \sim \N(\v{0}, k_\theta(X, X)).$ In many applications we
have access only to noisy observations $\v{y} = \v{f} + \v{\eps}$ for some
noise process $\v{\eps}$. If the noise is iid Gaussian, i.e., $\v{\eps}\sim
\N(\v{0}, \sigma_n^2 \I)$, then the observations are themselves Gaussian, $\v{y} \sim \N(\v{0}, K_y)$, where $K_y = k_\theta(X, X) + \sigma^2_n\I.$

The most common application of GPs is to Bayesian regression
\cite{rasmussen2006}, in which we attempt to predict the
function values $\v{f}^*$ at test points $X^*$ via the conditional
distribution given the training
data, $p(\v{f}^* | \v{y}; X, X^*, \theta)$. Sometimes, however, we do not observe the training inputs $X$,
or we observe them only partially or noisily. This setting is known as the Gaussian Process
Latent Variable Model (GP-LVM)  \cite{lawrence2004gaussian}; it uses
GPs as a model for unsupervised
  learning and nonlinear dimensionality reduction. The GP-LVM
  setting typically involves multi-dimensional observations, $Y = (\v{y}^{(1)}, \ldots,
  \v{y}^{(D)})$, with each output dimension $\v{y}^{(d)}$ modeled as an
  independent Gaussian process. The input locations and/or hyperparameters are typically sought via maximization of the {\em marginal likelihood}
\begin{align}
\mathcal{L}(X, \theta) = \log p(Y ; X, \theta) &= \sum_{i=1}^D -\frac{1}{2}\log |K_y| - \frac{1}{2} \v{y}_i^T K_y^{-1} \v{y}_i + C\nonumber\\
&= -\frac{D}{2}\log |K_y| - \frac{1}{2}\tr(K_y^{-1} YY^T) + C,\label{eqn:mlik}
\end{align}
though some recent work \cite{titsias2010bayesian,
  damianou2014} attempts to recover an
approximate posterior on $X$ by maximizing a variational bound. Given a
differentiable covariance function, this
maximization is typically performed by gradient-based methods, 
although local maxima can be a significant concern as
the marginal likelihood is generally non-convex.

\subsection{Scalability and approximate inference}
\label{sec:approx}

\begin{figure}
\centering

\begin{subfigure}[t]{.30\textwidth}
                \includegraphics[width=\textwidth]{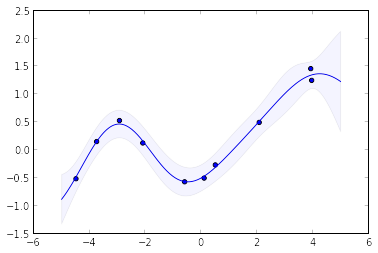}
                \caption{Full GP. }
                \label{fig:gp}
        \end{subfigure}\hspace{0.5em}
\begin{subfigure}[t]{.30\textwidth}
  \includegraphics[width=\textwidth]{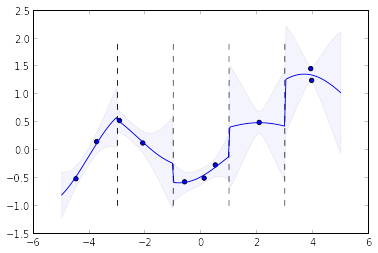}
  \caption{Local GPs.}
  \label{fig:local}
\end{subfigure}\hspace{0.5em}
\begin{subfigure}[t]{.30\textwidth}
  \includegraphics[width=\textwidth]{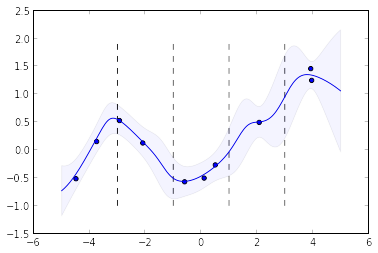}
  \caption{Bayesian committee machine.}
  \label{fig:bcm}
\end{subfigure}
\caption{Predictive distributions on a toy regression problem.}
\label{fig:approx}
\end{figure}

The main computational difficulty in GP methods is
the need to invert or factor the kernel matrix $K_y$, which requires time cubic
in $n$. In GP-LVM inference this must be done at every optimization step to evaluate
(\ref{eqn:mlik}) and its derivatives.

This complexity has inspired
a number of approximations. The most commonly studied are {\em inducing-point} methods, in which the unknown function is
represented by its values at a set of $m$ inducing
points, where $m \ll n$. These points can be chosen by maximizing the marginal
likelihood in a surrogate model \cite{quinonero2005, lawrence2007learning} or by
minimizing the KL divergence between the approximate and exact GP posteriors
\cite{titsias2009variational}. Inference in such models can
typically be done in $O(nm^2)$ time, but this comes at the price
of reduced representational capacity: while smooth functions with long lengthscales may
be compactly represented by a small number of inducing points, for
quickly-varying functions with significant local
structure it may be difficult to find any faithful representation more
compact than the complete set of training observations.

A separate class of approximations, so-called ``local'' GP methods
\cite{rasmussen2006, nguyen2009model, park2011domain}, involves
partitioning the inputs into blocks of $m$ points each, then modeling
each block with an independent Gaussian process. If the partition is
spatially local, this corresponds to a covariance function that imposes independence
between function values in different regions of the input
space. Computationally, each block requires only $O(m^3)$ time; the
total time is linear in the number of blocks. Local approximations
preserve short-lengthscale structure within each block, but their harsh independence assumptions can lead to
predictive discontinuities and inaccurate uncertainties
(Figure~\ref{fig:local}). These assumptions are problematic for
GP-LVM inference because the marginal likelihood becomes
discontinuous at block boundaries. Nonetheless, local GPs sometimes
work very well in practice, achieving results comparable to more
sophisticated methods in a fraction of the time \cite{chalupka2012}.

The Bayesian Committee Machine (BCM) \cite{tresp2000bayesian} attempts
to improve on independent local GPs by averaging the predictions of
multiple GP experts. The model is formally equivalent to an inducing-point model in which the {\em test points} are the inducing points,
i.e., it assumes that the training blocks are conditionally
independent given the test data. The BCM can yield high-quality
predictions that avoid the pitfalls of local GPs
(Figure~\ref{fig:bcm}), while
maintaining scalability to very large datasets
\cite{deisenroth2015distributed}. However, as a purely predictive
approximation, it is unhelpful in the GP-LVM setting, where we are
interested in the likelihood of our training set irrespective of any
particular test data.  The desire for a BCM-style approximation to the
marginal likelihood was part of the motivation for this current work;
in Section~\ref{sec:approx-predict} we show that the GPRF proposed in this
paper can be viewed as such a model.

Mixture-of-experts models \cite{rasmussen2002infinite, nguyen2014fast}
extend the local GP concept in a different direction: instead of
deterministically assigning points to GP models based on their spatial
locations, they treat the assignments as unobserved random variables
and do inference over them. This allows the model to adapt to different
functional characteristics in different regions of the space, at the
price of a more difficult inference task. We are not aware of
mixture-of-experts models being applied in the GP-LVM setting, though
this should in principle be possible.

Simple building blocks are often combined to create more complex
approximations. 
The PIC approximation \cite{snelson2007} blends a global
inducing-point model with local block-diagonal covariances, 
thus capturing a mix of global and local
structure, though with the same boundary discontinuities as in
``vanilla'' local GPs. A related approach is the use of covariance
functions with compact support \cite{vanhatalo2008} to capture local
variation in concert with global inducing points. \cite{chalupka2012} surveys and compares
several approximate GP regression methods on synthetic and real-world
datasets. 

Finally, we note here the similar title of \cite{zhong2010gaussian},
which is in fact orthogonal to the present work: they use a random
field as a {\em prior} on input locations, whereas this paper defines a
random field decomposition of the GP model itself, which may be
combined with any prior on $X$.

\subsection{Markov Random Fields}

We recall some basic theory regarding Markov random fields
(MRFs), also known as undirected graphical models \cite{koller2009probabilistic}. A pairwise
MRF consists of an undirected graph $(V, E)$, along with {\em node potentials} $\psi_i$ and {\em edge
potentials} $\psi_{ij}$, which define an {\em energy function} on a random vector $\v{y}$,
\begin{equation}
E(\v{y}) = \sum_{i\in V} \psi_{i}(\v{y}_i) + \sum_{(i,j)\in E}
\psi_{ij}(\v{y}_i, \v{y}_j),\label{eqn:mrf}
\end{equation}
where $\v{y}$ is partitioned into components
$\v{y}_i$ identified with nodes in the graph. This energy in turn defines a
probability density, the ``Gibbs
distribution'', given by $p(\v{y}) = \frac{1}{Z}\exp(-E(\v{y}))$ where
$Z = \int \exp(-E(\v{z})) d\v{z}$ is a normalizing constant.

Gaussian random fields are the special case of pairwise MRFs in which
the Gibbs distribution is multivariate Gaussian. Given a partition of
$\v{y}$ into sub-vectors $\v{y}_1, \v{y}_2, \ldots, \v{y}_M$, a
zero-mean Gaussian distribution with covariance $K$ and precision
matrix $J = K^{-1}$ can
be expressed by potentials 
\begin{equation}
\psi_i(\v{y}_i) = -\frac{1}{2}\v{y}_i^T
J_{ii} \v{y}_i, \qquad\psi_{ij}(\v{y}_i, \v{y}_j) = -\v{y}_i^T J_{ij}
\v{y}_j \label{eqn:gaussian-mrf}
\end{equation} where $J_{ij}$ is the submatrix of $J$ corresponding
to the sub-vectors $\v{y}_i$, $\v{y}_j$. The
normalizing constant $Z =
(2\pi)^{n/2}|K|^{1/2}$ involves the determinant of the covariance
matrix. Since edges whose potentials are zero can be dropped without
effect, the nonzero entries of the precision matrix can be seen as
specifying the edges present in the graph.

\section{Gaussian Process Random Fields}
\label{sec:gprf}

We consider a vector of $n$ real-valued\footnote{The extension to
  multiple independent outputs is straightforward.} observations
$\v{y} \sim \N(\v{0}, K_y)$ modeled by a GP, where $K_y$ is implicitly
a function of input locations $X$ and hyperparameters $\theta$. Unless
otherwise specified, all probabilities $p(\v{y}_i), p(\v{y}_i,
\v{y}_j)$, etc., refer to marginals of this full GP. We would like to
perform gradient-based optimization on the marginal likelihood
(\ref{eqn:mlik}) with respect to $X$ and/or $\theta$, but suppose that
the cost of doing so directly is prohibitive.

In order to proceed, we assume a partition $\v{y} = (\v{y}_1, \v{y}_2,
\ldots, \v{y}_M)$ of the observations into $M$ blocks of size at most
$m$, with an implied corresponding partition $X = (X_1, X_2, \ldots,
X_M)$ of the (perhaps unobserved) inputs. The source of this partition
is not a focus of the current work; we might imagine that the blocks
correspond to spatially local clusters of input points, assuming that
we have noisy observations of the $X$ values or at least a reasonable
guess at an initialization. We let $K_{ij} = \cov_\theta(\v{y}_i,
\v{y}_j)$ denote the appropriate submatrix of $K_{y}$, and $J_{ij}$
denote the corresponding submatrix of the precision matrix
$J_y=K_y^{-1}$; note that $J_{ij} \ne (K_{ij})^{-1}$ in general.

\subsection{The GPRF Objective}

Given the precision matrix $J_y$, we could use
(\ref{eqn:gaussian-mrf}) to represent the full GP distribution in
factored form as an MRF. This is not directly useful, since computing
$J_y$ requires cubic time. Instead we propose approximating the
marginal likelihood via a random field in which local GPs are
connected by pairwise potentials. Given an edge set which we will
initially take to be the complete graph, $E =\{(i,j) | 1\le i < j \le
M\}$, our approximate objective is
\begin{align}
q_{GPRF}(\v{y}; X, \theta)&= \prod_{i=1}^M p(\v{y}_i) 
\prod_{(i,j)\in E} \frac{p(\v{y}_i, \v{y}_j)}{p(\v{y}_i) p(\v{y}_j)},\label{eqn:gprf-naive}\\
&= \prod_{i=1}^M p(\v{y}_i)^{1-|E_i|} \prod_{(i,j)\in E} p(\v{y}_i, \v{y}_j) \nonumber
\end{align}
where $E_i$ denotes the neighbors of $i$ in the graph, and $p(\v{y}_i)$ and $p(\v{y}_{i},
\v{y}_j)$ are marginal probabilities under the full GP;
equivalently they are the likelihoods of local GPs
defined on the points $X_i$ and $X_i \cup X_j$ respectively. Note that
these local likelihoods depend implicitly on $X$ and $\theta$. Taking
the log, we obtain the energy function of an unnormalized MRF
\begin{equation}
\log q_{GPRF}(\v{y}; X, \theta) = \sum_{i=1}^M (1-|E_i|)\log p(\v{y}_i)
 + \sum_{(i,j)\in E} \log p(\v{y}_i, \v{y}_j)  \label{eqn:gprf-log}
\end{equation}
with potentials
\begin{equation}
\psi_i^{GPRF}(\v{y}_i) = (1-|E_i|)\log p(\v{y}_i), \qquad \psi_{ij}^{GPRF}(\v{y}_i, \v{y}_j) =
\log p(\v{y}_i, \v{y}_j).\end{equation}

We refer to the approximate objective (\ref{eqn:gprf-log}) as
$q_{GPRF}$ rather than $p_{GPRF}$ to emphasize that it is not in
general a normalized probability density. It can be interpreted as a
``Bethe-type'' approximation \cite{yedidia2001bethe}, in which a joint
density is approximated via overlapping pairwise marginals. In the
special case that the full precision matrix $J_y$ induces a tree
structure on the blocks of our partition, $q_{GPRF}$ recovers the
exact marginal likelihood. (This is shown in the supplementary material.) In general this will not be the case, but in the spirit of loopy
belief propagation \cite{murphy1999loopy}, we consider the
tree-structured case as an approximation for the general setting.

Before further analyzing the nature of the approximation, we first
observe that as a sum of local Gaussian log-densities, the objective (\ref{eqn:gprf-log})
is straightforward to implement and fast to evaluate. Each of the $O(M^2)$
pairwise densities requires $O((2m)^3) = O(m^3)$ time, for an overall complexity of
$O(M^2m^3) = O(n^2m)$ when $M=n/m$. The quadratic dependence on $n$ cannot
be avoided by any algorithm that computes similarities between all
pairs of training points; however, in practice we will consider ``local''
modifications in which $E$ is something smaller than all
pairs of blocks. For example, if each block is connected only to a
fixed number of spatial neighbors, the complexity reduces
to $O(nm^2)$, i.e., linear in $n$. In the special case where $E$ is
the empty set, we recover the exact likelihood of independent local GPs.

It is also straightforward to obtain the gradient of
(\ref{eqn:gprf-log}) with respect to hyperparameters $\theta$ and inputs $X$, by summing
the gradients of the local densities. The likelihood and gradient for each term in the sum
can be evaluated independently using only local subsets of the
training data, enabling a simple parallel implementation. 

Having seen that $q_{GPRF}$ can be optimized efficiently, it remains
for us to argue its validity as a proxy for the full GP marginal
likelihood. Due to space constraints we
defer proofs to the supplementary material, though our results are not
difficult. We first show that, like the full marginal likelihood (\ref{eqn:mlik}),
$q_{GPRF}$ has the form of a Gaussian distribution, but with a
different precision matrix.

\begin{theorem}
The objective $q_{GPRF}$ has the form of an unnormalized Gaussian density
with precision matrix $\tilde{J}$, with blocks $\tilde{J}_{ij}$ given by
\begin{equation}
\tilde{J}_{ii} = K_{ii}^{-1} + \sum_{j\in E_i} \left(Q^{(ij)}_{11}
    - K_{ii}^{-1}\right), \qquad \tilde{J}_{ij} =
  \left\{\begin{array}{ll}Q^{(ij)}_{12} & \text{ if } (i,j) \in E\\0
    & \text{ otherwise.}\end{array}\right),\label{eqn:approx-precision}
\end{equation}
where  $Q^{(ij)}$ is the {\em local precision  matrix} $Q^{(ij)}$
defined as the inverse of the marginal covariance,
\[Q^{(ij)} = \left(\begin{array}{cc} Q^{(ij)}_{11} &  Q^{(ij)}_{12}\\
  Q^{(ij)}_{21}  & Q^{(ij)}_{22}\end{array}\right) = \left(\begin{array}{cc} K_{ii} &  K_{ij}\\
  K_{ji}  & K_{jj}\end{array}\right)^{-1}.\]
\end{theorem}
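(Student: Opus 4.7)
The plan is to expand $\log q_{GPRF}$ explicitly as a quadratic form in $\v{y}$ and then read off the block entries of its precision matrix by matching coefficients. Since each term on the right-hand side of (\ref{eqn:gprf-log}) is the log-density of a Gaussian distribution under the full GP, each contributes a quadratic-plus-constant in the relevant subvector of $\v{y}$. Summing finitely many quadratic forms produces another quadratic form, so $\log q_{GPRF}$ is of the shape $-\tfrac{1}{2}\v{y}^T \tilde{J}\,\v{y} + c$ for some symmetric $\tilde{J}$ and constant $c$; this already justifies calling $q_{GPRF}$ an unnormalized Gaussian density. All that remains is to identify the blocks of $\tilde{J}$.

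First I would write the unary contributions. The marginal $p(\v{y}_i) = \N(\v{0}, K_{ii})$ contributes a quadratic form $-\tfrac{1}{2}\v{y}_i^T K_{ii}^{-1}\v{y}_i$ to the log-density, weighted by $(1-|E_i|)$ in (\ref{eqn:gprf-log}). This contributes only to the diagonal block $\tilde{J}_{ii}$, adding $(1-|E_i|)K_{ii}^{-1}$.

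Next I would expand the pairwise contributions. For $(i,j) \in E$, the joint marginal $p(\v{y}_i, \v{y}_j)$ is Gaussian with precision $Q^{(ij)}$, so
\begin{equation*}
\log p(\v{y}_i,\v{y}_j) = -\tfrac{1}{2}\bigl(\v{y}_i^T Q^{(ij)}_{11}\v{y}_i + 2\,\v{y}_i^T Q^{(ij)}_{12}\v{y}_j + \v{y}_j^T Q^{(ij)}_{22}\v{y}_j\bigr) + \text{const}.
\end{equation*}
The cross term adds $Q^{(ij)}_{12}$ to the off-diagonal block $\tilde{J}_{ij}$ (and its transpose to $\tilde{J}_{ji}$), while the $\v{y}_i^T Q^{(ij)}_{11}\v{y}_i$ and $\v{y}_j^T Q^{(ij)}_{22}\v{y}_j$ pieces add to $\tilde{J}_{ii}$ and $\tilde{J}_{jj}$ respectively. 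Since $Q^{(ij)}$ is the inverse of the symmetric marginal covariance of $(\v{y}_i,\v{y}_j)$, the $\v{y}_i$-block $Q^{(ij)}_{11}$ is well-defined regardless of the ordering of the pair.

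Finally I would collect like terms. For off-diagonal blocks, only edges contribute, giving $\tilde{J}_{ij}=Q^{(ij)}_{12}$ when $(i,j)\in E$ and $0$ otherwise. For diagonal blocks, summing the unary contribution with the $|E_i|$ pairwise contributions from edges incident to $i$ yields
\begin{equation*}
\tilde{J}_{ii} = (1-|E_i|)K_{ii}^{-1} + \sum_{j\in E_i} Q^{(ij)}_{11} = K_{ii}^{-1} + \sum_{j\in E_i}\bigl(Q^{(ij)}_{11}-K_{ii}^{-1}\bigr),
\end{equation*}
matching (\ref{eqn:approx-precision}). There is no serious obstacle here: the only care needed is bookkeeping to ensure that every quadratic monomial in $\v{y}$ is accounted for exactly once and that the symmetry $\tilde{J}_{ji}=\tilde{J}_{ij}^T$ is preserved, both of which follow automatically from the symmetry of each $Q^{(ij)}$.
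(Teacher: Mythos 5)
Your proposal is correct and follows essentially the same route as the paper's own proof: expand each unary and pairwise log-marginal as a Gaussian quadratic form, then collect coefficients blockwise to read off $\tilde{J}$. Your remark that $Q^{(ij)}_{11}$ is well-defined regardless of the ordering of the pair is exactly the paper's observation that $Q^{(ij)}_{11} = Q^{(ji)}_{22}$, so nothing is missing.
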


Although the Gaussian density represented by $q_{GPRF}$ is not in general normalized, we show that it is {\em
  approximately} normalized in a certain sense. 

\begin{theorem}
The objective $q_{GPRF}$ is approximately normalized in the sense that
the optimal value of the {\em Bethe free energy} \cite{yedidia2001bethe}, 
\begin{equation}
F_B(b) = \sum_{i\in V} \left(\int_{\v{y}_i} b_i(\v{y}_i) \frac{(1-|E_i|)\ln
    b_i(\v{y}_i)}{\ln \psi_i(\v{y}_i)}\right)  + \sum_{(i,j)\in E} \left(\int_{\v{y}_i, \v{y}_j} b_{ij}(\v{y}_i,
  \v{y}_j) \ln \frac{b_{ij}(\v{y}_i,
  \v{y}_j)}{\psi_{ij}(\v{y}_i, \v{y}_j))}\right)
\label{eqn:bethe-energy} \approx \log Z,
\end{equation}
the approximation to the normalizing constant found by loopy belief
propagation, is precisely zero. Furthermore, this
optimum is obtained when the pseudomarginals $b_i, b_{ij}$ are
taken to be the true GP marginals $p_i, p_{ij}$. 
\end{theorem}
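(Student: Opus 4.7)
The plan is to substitute the true GP marginals directly into the Bethe free energy and exploit the fact that the GPRF's potentials were defined to equal precisely those marginals, so that every term cancels. The bulk of the work is then to justify that this choice is a constrained stationary point of $F_B$ (equivalently, a fixed point of loopy BP), rather than merely a convenient place to evaluate.

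First I would plug $b_i = p_i$ and $b_{ij} = p_{ij}$ into (\ref{eqn:bethe-energy}), reading the node integrand in the standard Bethe form $(1-|E_i|) b_i \ln b_i - b_i \ln \psi_i$. From the definition $\psi_i^{GPRF}(\v{y}_i) = p(\v{y}_i)^{1-|E_i|}$ we have $\ln \psi_i = (1-|E_i|)\ln p_i$, so the integrand becomes $(1-|E_i|) p_i \ln p_i - (1-|E_i|) p_i \ln p_i \equiv 0$. Similarly, $\psi_{ij}^{GPRF}(\v{y}_i, \v{y}_j) = p(\v{y}_i, \v{y}_j)$, so the pairwise integrand is $p_{ij} \ln(p_{ij}/p_{ij}) \equiv 0$. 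Summing yields $F_B(p) = 0$ by direct cancellation, establishing the numerical claim of the theorem.

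Next I would argue that $b = p$ is in fact a constrained stationary point, which by the standard result of \cite{yedidia2001bethe} identifies it with a BP fixed point. The true marginals trivially satisfy the local consistency constraints $\int b_{ij}(\v{y}_i, \v{y}_j)\, d\v{y}_j = b_i(\v{y}_i)$ because they arise from a single joint distribution. Writing the Lagrangian for the constrained minimization and differentiating in $b_{ij}$ gives the usual condition that $b_{ij} \propto \psi_{ij}$ times a product of incoming messages; because $\psi_{ij}$ already equals the target belief $p_{ij}$, the fixed-point equations are satisfied by taking all messages to be constant functions, and the induced node beliefs reduce to $b_i = p_i$ as required.

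The main obstacle is really this last identification rather than the algebra: $F_B$ is nonconvex on loopy graphs, so ``the optimal value'' must be understood as the value attained at a BP fixed point (the quantity BP actually returns), not as a true global infimum. Once that reading is adopted, the two steps above — direct cancellation to get $F_B(p)=0$, and verification of the BP fixed-point equations at $b=p$ — together yield the theorem.
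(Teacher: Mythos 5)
Your substitution step is sound and is essentially the same algebra the paper performs, but your proof as a whole establishes less than the theorem claims, and the paper's own argument shows the extra work you do in the second half is unnecessary. The paper does not stop at evaluating $F_B$ at $b=p$: it substitutes the GPRF potentials for \emph{arbitrary} pseudomarginals $b$ and observes that the entire Bethe free energy collapses into a sum of Kullback--Leibler divergences, $F_B(b) = \sum_{i} KL[b_i\|p_i] + \sum_{(i,j)} KL[b_{ij}\|p_{ij}]$. Since each term is nonnegative and vanishes exactly when the corresponding pseudomarginal equals the true GP marginal, global optimality of $b=p$ with value $0$ is immediate --- no Lagrangian stationarity check, no appeal to BP fixed points, and no need to reinterpret ``the optimal value'' as ``the value at a fixed point.'' Your closing concession that the theorem must be read as a statement about stationary points is exactly the gap: as written you prove only that $b=p$ is a stationary point where $F_B=0$, not that $0$ is the minimum, whereas the KL decomposition delivers the stronger claim directly. (The nonconvexity worry that motivates your reinterpretation dissolves once the energy is seen to be a sum of KLs for this particular choice of potentials.)

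There is also a concrete error in your fixed-point verification. With the GPRF factorization $\psi_i = p_i^{1-|E_i|}$ and $\psi_{ij} = p_{ij}$, constant messages give node beliefs $b_i \propto \psi_i \prod_{j\in E_i} m_{j\to i} \propto p_i^{1-|E_i|}$, which is not $p_i$ unless $|E_i|\le 1$; likewise $b_{ij}\propto \psi_{ij}\psi_i\psi_j$ is not $p_{ij}$. The correct fixed point takes $m_{j\to i}(\v{y}_i) = p(\v{y}_i)$, under which $b_i \propto p_i^{1-|E_i|}\cdot p_i^{|E_i|} = p_i$ and the message update $m_{i\to j}(\v{y}_j) = \int p_{ij}\, p_i^{1-|E_i|} \prod_{k\ne j} p_i \, d\v{y}_i = \int p_{ij}\, d\v{y}_i = p_j(\v{y}_j)$ is self-consistent. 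If you want to retain the BP-fixed-point route, this is the repair; but the cleaner path is the paper's KL decomposition, which makes the fixed-point analysis a corollary (loopy BP converges to stationary points of $F_B$, and here the unique minimizer is the set of true GP marginals) rather than a load-bearing step.
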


This implies that loopy belief propagation run on a GPRF
would recover the marginals of the true GP.

\subsection{Predictive equivalence to the BCM}
\label{sec:approx-predict}

We have introduced $q_{GPRF}$ as a surrogate model
for the training set $(X, \v{y})$; however, it is natural to extend
the GPRF to make predictions at a set of test points $X^*$, by including the
function values $\v{f}^* = f(X^*)$ as an $M+1$st block, with an edge to each of the training blocks. The resulting
predictive distribution,
\begin{align}
p_{GPRF}(\v{f}^* | \v{y}) \propto q_{GPRF}(\v{f}^*, \v{y}) 
&= p(\v{f}^*) \prod_{i=1}^M \frac{p(\v{y}_i,
  \v{f}^*)}{p(\v{y}_i) p(\v{f}^*)} \left(\prod_{i=1}^M p(\v{y}_i) \prod_{(i,j)\in E} \frac{p(\v{y}_i, \v{y}_j)}{p(\v{y}_i)
    p(\v{y}_j)}\right)  \nonumber \\
&\propto p(\v{f}^*)^{1-M} \prod_{i=1}^M p(\v{f}^* | \v{y}_i),
\end{align}
corresponds exactly to the prediction of 
the Bayesian Committee Machine (BCM) \cite{tresp2000bayesian}. This motivates the
GPRF as a natural extension of the BCM as a model for the training
set, providing an alternative to the standard transductive
interpretation of the BCM.\footnote{The GPRF is still transductive, in
  the sense that adding a test block $\v{f^*}$ will change the
  marginal distribution on the training observations $\v{y}$, as 
  can be seen explicitly in the precision matrix (\ref{eqn:approx-precision}). The contribution of the GPRF is that it provides a reasonable model for
  the training-set likelihood even in the absence of test
  points. } A similar derivation shows that the conditional distribution of any
block $\v{y}_i$ given all other blocks $\v{y}_{j\ne i}$ also takes the
form of a BCM prediction, suggesting the possibility of
{\em pseudolikelihood} training \cite{besag1975statistical}, i.e.,
directly optimizing the quality of BCM predictions on held-out blocks
(not explored in this paper).

\section{Experiments}
\vspace{-0.2cm}
\subsection{Uniform Input Distribution}
\vspace{-0.1cm}
We first consider a 2D synthetic dataset intended to simulate spatial location tasks such as WiFi-SLAM
\cite{ferris2007wifi} or seismic event location (below), in which we
observe high-dimensional measurements but have only noisy information
regarding the locations at which those measurements were
taken. We sample $n$ points uniformly from the square of side length
$\sqrt{n}$ to generate the true inputs $X$, then sample 50-dimensional output $Y$ from independent GPs
with SE kernel $k(r) = \exp(-(r/\ell)^2)$ for $\ell=6.0$ and noise standard deviation
$\sigma_n = 0.1$. The {\em observed} points
$X^\text{obs} \sim N(X, \sigma^2_\text{obs}I)$ arise by corrupting $X$ with
isotropic Gaussian noise of standard deviation
$\sigma_\text{obs}=2$. The parameters $\ell$, $\sigma_n$, and
$\sigma_\text{obs}$ were chosen to generate problems with
interesting short-lengthscale structure for which
GP-LVM optimization could nontrivially improve the initial noisy locations. Figure \ref{fig:synthX} shows a typical sample from this model.

\begin{figure}
 \centering
 \begin{subfigure}[t]{.215\textwidth}
         \includegraphics[width=\textwidth]{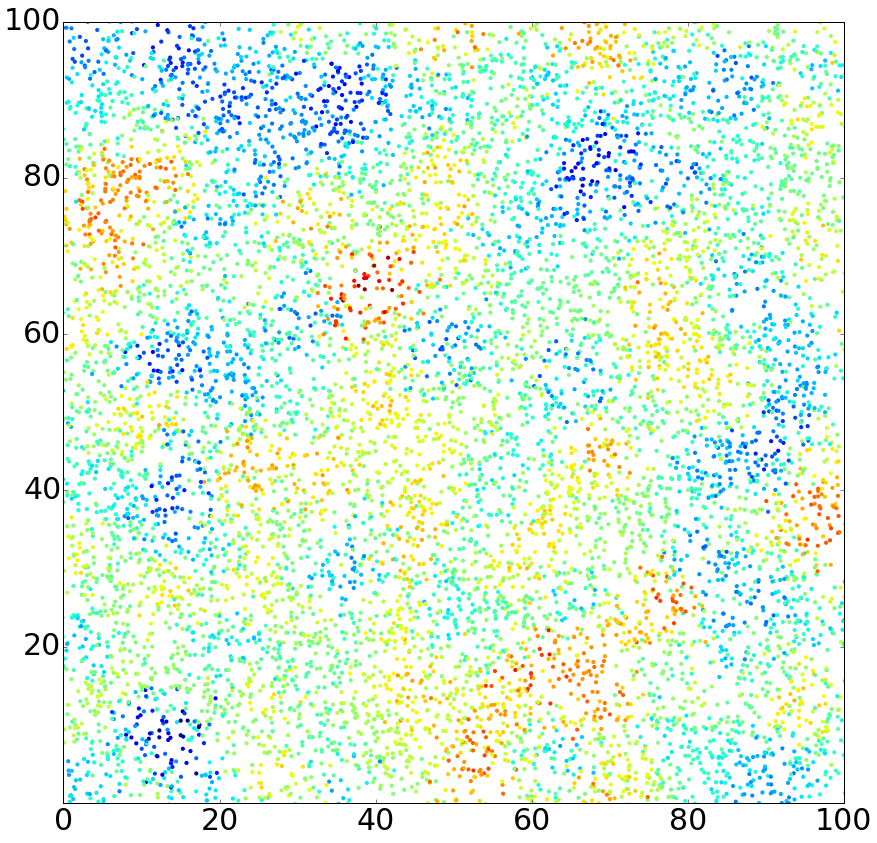}
         \caption{Noisy observed locations: mean error 2.48.}
         \label{fig:synthX}
     \end{subfigure}\hspace{0.5em}  
\begin{subfigure}[t]{.21\textwidth}
        \includegraphics[width=\textwidth]{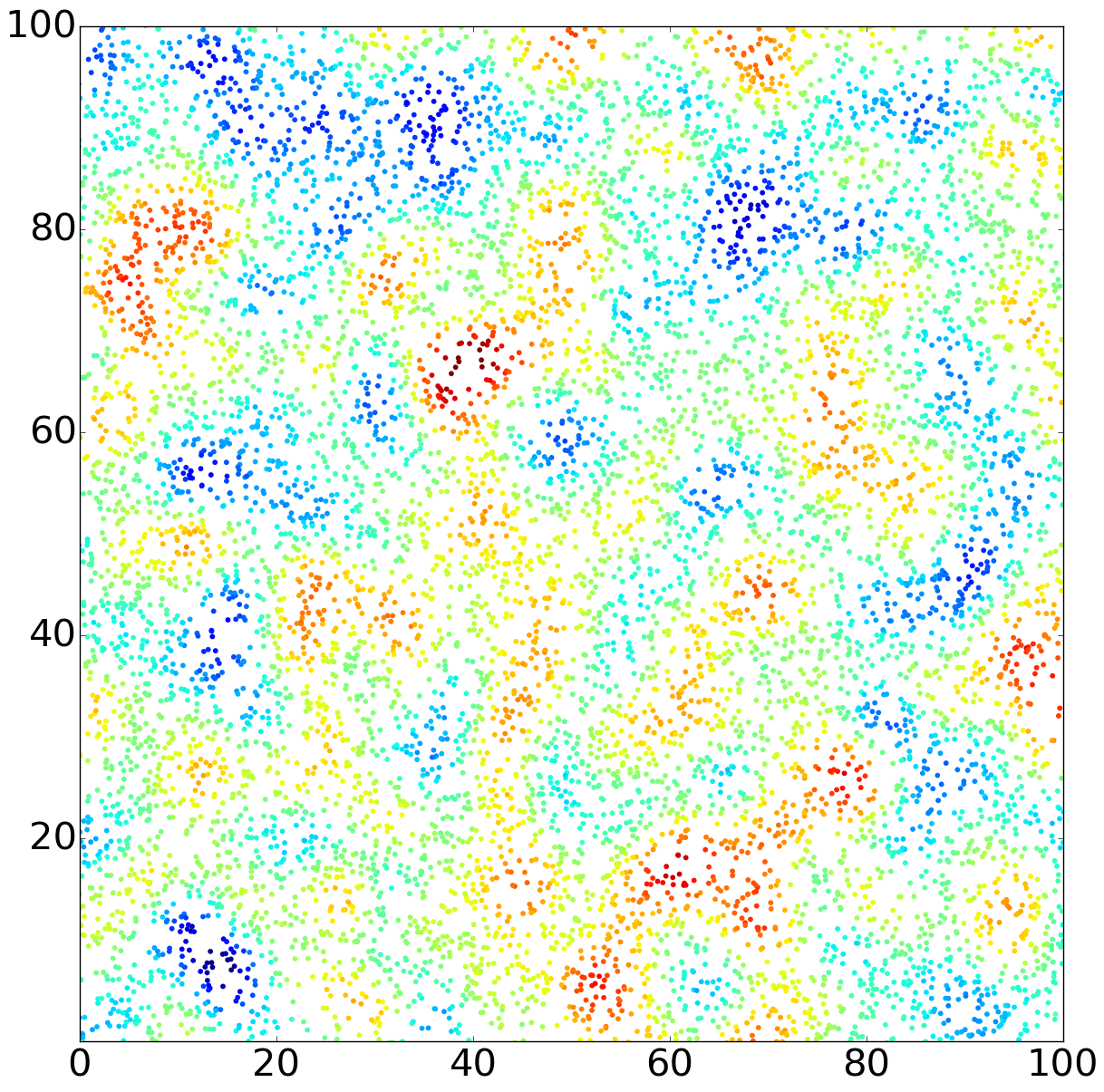}
        \caption{Full GP: 0.21.}
        \label{fig:synth_exact} 
    \end{subfigure}\hspace{0.5em}
\begin{subfigure}[t]{.215\textwidth}
        \includegraphics[width=\textwidth]{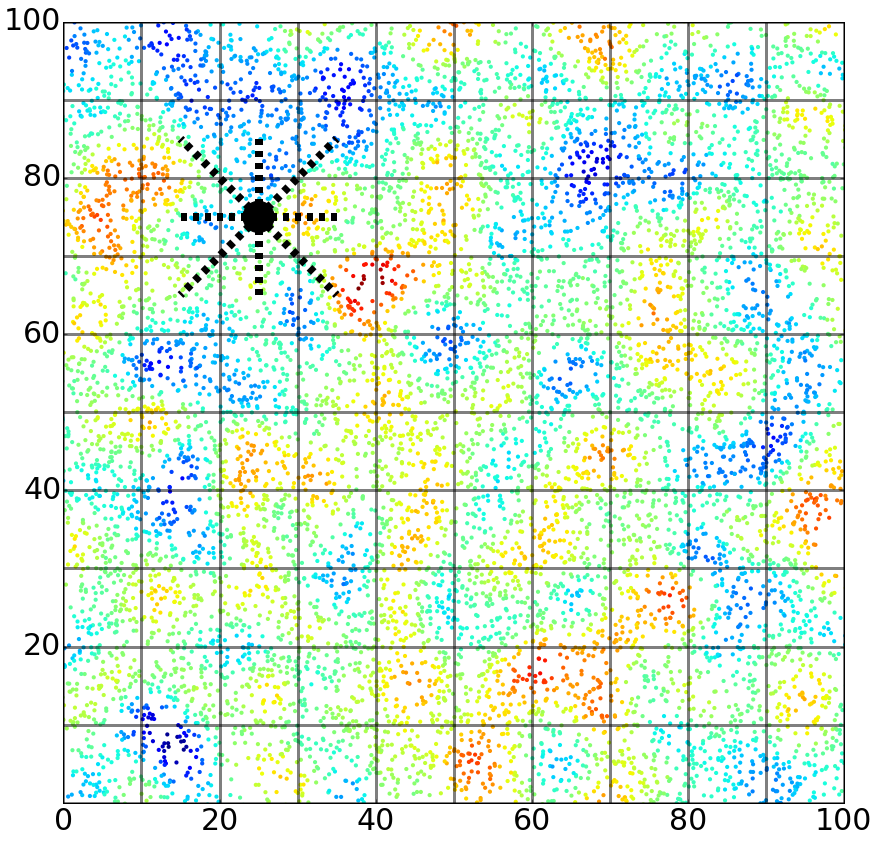}
        \caption{GPRF-100: 0.36. (showing grid cells)}
        \label{fig:synth_gprf} 
    \end{subfigure}\hspace{0.5em}
\begin{subfigure}[t]{.21\textwidth}
        \centering
        \includegraphics[width=\textwidth]{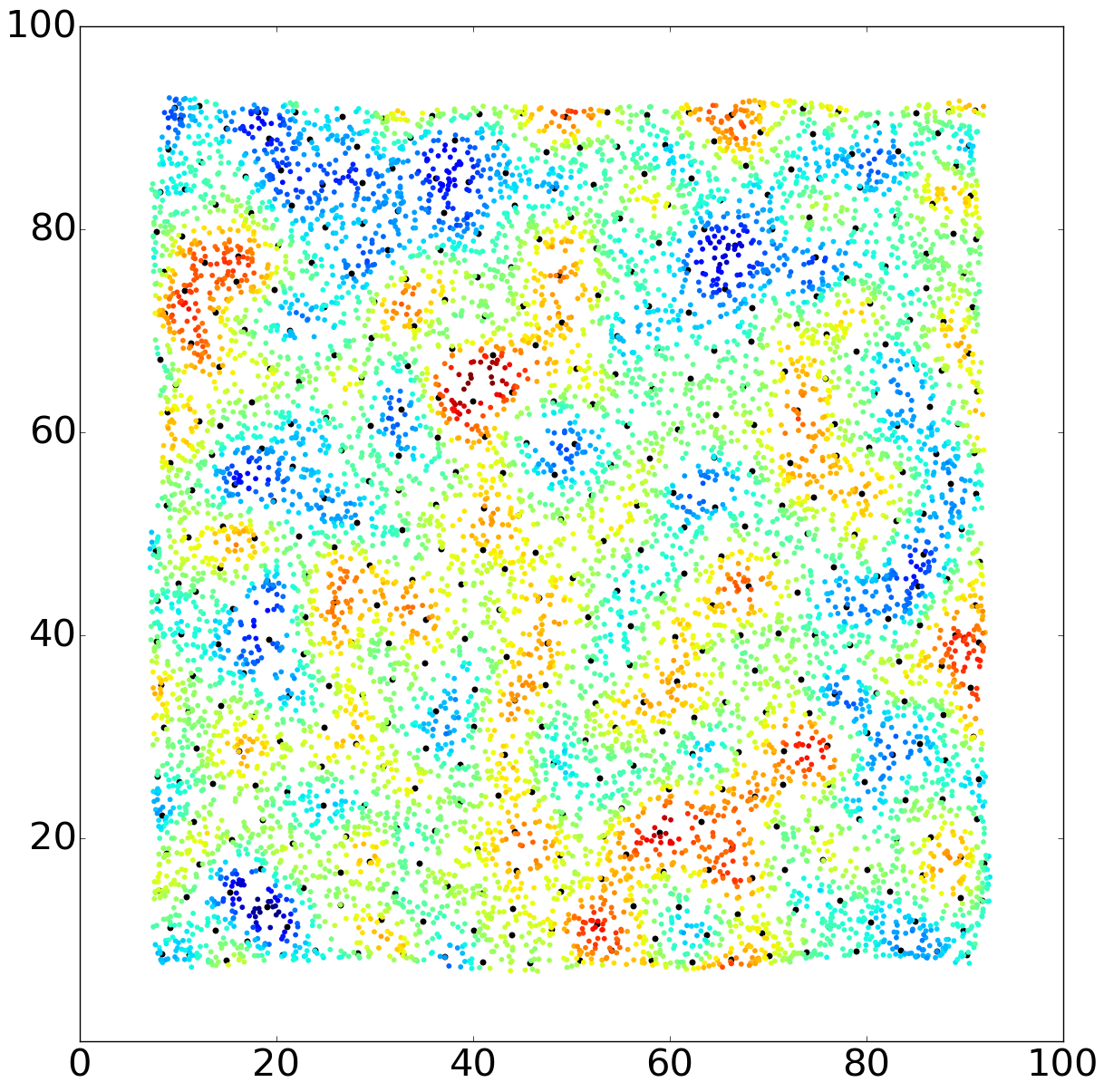}
        \caption{FITC-500: 4.86. (with inducing points, note contraction)}
        \label{fig:synth_fitc500} 
\end{subfigure}
\caption{Inferred locations on synthetic data
  ($n=10000$), colored by the first output dimension $\v{y}_1$. }
\label{fig:synth_data}

\end{figure}


For local GPs and GPRFs, we take the spatial partition to be a
grid with $n/m$ cells, where $m$ is the desired number of points per
cell.  The GPRF edge set $E$ connects each cell to its 
eight neighbors (Figure
\ref{fig:synth_gprf}), yielding linear time
complexity $O(nm^2)$. During optimization, a practical choice
is necessary: do we use a fixed partition of the points, or re-assign
points to cells as they cross spatial boundaries? The latter
corresponds to a coherent (block-diagonal) spatial covariance function, but introduces
discontinuities to the marginal likelihood. In our experiments the GPRF was not sensitive to
this choice, but local GPs performed more reliably with fixed
spatial boundaries (in spite of the discontinuities), so we used this
approach for all experiments.
\begin{figure}
\begin{subfigure}[t]{.32\textwidth}
                 \includegraphics[width=\textwidth, trim=0cm -0.0cm 0cm 0 -0.0cm, clip]{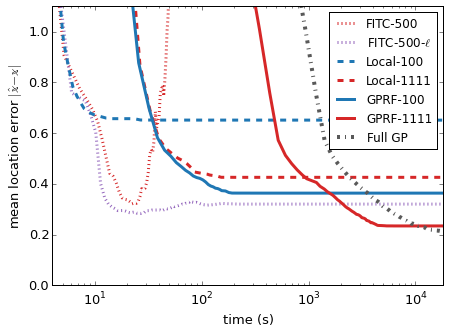}
                 \caption{Mean location error over time for $n=10000$,
                   including comparison to full GP.}
                 \label{fig:truegp}
\end{subfigure}\hspace{0.3em}
\begin{subfigure}[t]{.33\textwidth}
                 \includegraphics[width=\textwidth,  trim=0cm -0.2cm 0cm 0 -0.0cm, clip]{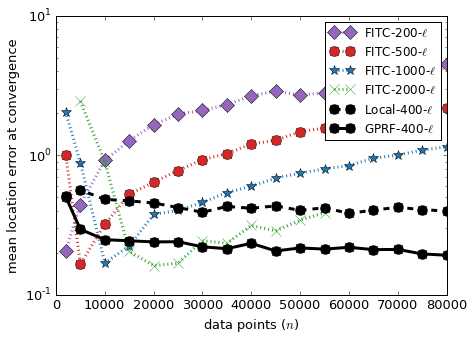}
                 \caption{Mean error at convergence as a
                   function of $n$, with learned
                   lengthscale.}
                 \label{fig:fitc_lscale}
         \end{subfigure}\hspace{0.3em}
 \begin{subfigure}[t]{.32\textwidth}
                 \includegraphics[width=\textwidth, trim=0 0 0 0 0cm, clip]{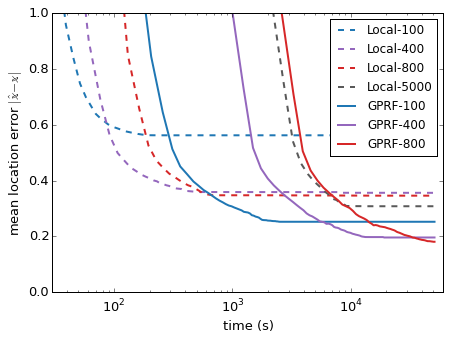}
                 \caption{Mean location error over time for $n=80000$.}
                 \label{fig:eighty}
         \end{subfigure}
\caption{Results on synthetic data.}
\label{fig:synthresults}
\vspace{-0.5cm}
\end{figure} 

For comparison, we also evaluate the Sparse GP-LVM, implemented in GPy
\cite{gpy2014}, which uses the FITC approximation to the
marginal likelihood \cite{lawrence2007learning}. (We also considered
the Bayesian GP-LVM \cite{titsias2010bayesian}, but found it to be more
resource-intensive with no meaningful difference in results on this
problem.) Here the approximation parameter $m$ is the number of inducing points. 

We ran L-BFGS optimization to recover maximum a posteriori (MAP)
locations, or local optima thereof. Figure \ref{fig:truegp} shows
mean location error (Euclidean distance) for $n=10000$ points; at this
size it is tractable to compare directly to the full GP-LVM. The GPRF
with a large block size ($m$=1111, corresponding to a 3x3 grid) nearly matches the solution
quality of the full GP while requiring less time, while the local
methods are quite fast to converge but become stuck at inferior
optima. The FITC optimization exhibits an interesting pathology: it
initially moves towards a good solution but then diverges towards what
turns out to correspond to
a contraction of the space (Figure \ref{fig:synth_fitc500}); we
conjecture this is because there are not enough inducing points to
faithfully represent the full GP distribution over the entire space. A
partial fix is to allow FITC to jointly optimize over locations and
the correlation lengthscale $\ell$; this yielded a biased lengthscale
estimate $\hat{\ell} \approx 7.6$ but more accurate
locations (FITC-500-$\ell$ in Figure \ref{fig:truegp}).

To evaluate scaling behavior, we next considered problems of
increasing size up to $n=80000.$\footnote{The astute reader will wonder how we generated
  synthetic data on problems that are clearly too large for an exact
  GP. For these synthetic problems as well as the seismic example below, the
  covariance matrix is relatively sparse, with only \texttildelow 2\% of entries
  corresponding to points within six kernel lengthscales of each other. By considering only these
  entries, we were able to draw samples using a sparse Cholesky
  factorization, although this required approximately 30GB of RAM. Unfortunately, this approach does not straightforwardly
  extend to GP-LVM inference under the exact GP, as the standard
  expression for the marginal likelihood
  derivatives \[\frac{\partial}{\partial \v{x}_{i}} \log p(\v{y})
  = \frac{1}{2} \tr\left( \left((K_y^{-1} \v{y}) (K_y^{-1} \v{y})^T -
      K_y^{-1}\right) \frac{\partial K_y}{\partial \v{x}_{i}} \right)\] involves the full
  precision matrix $K_y^{-1}$ which is not sparse in general. Bypassing this expression via 
  automatic differentiation through the sparse Cholesky
  decomposition could perhaps allow exact GP-LVM inference to scale to
  somewhat larger problems.} Out of generosity to FITC we allowed each method to learn
its own preferred lengthscale. Figure \ref{fig:fitc_lscale}
reports the solution quality at convergence, showing that even with an
adaptive lengthscale, FITC requires increasingly many inducing points
to compete in large spatial domains. This is intractable for larger
problems due to $O(m^3)$ scaling; indeed, attempts to run at $n>55000$
with 2000 inducing points exceeded 32GB of available memory. Recently,
more sophisticated inducing-point methods have claimed scalability to very large
datasets \cite{hensman2013gaussian, gal2014distributed}, but they do
so with $m\le 1000$; we expect that
they would hit the same fundamental scaling constraints for problems
that inherently require many inducing points. 

On our largest synthetic problem, $n=80000$, inducing-point
approximations are intractable, as is the full GP-LVM. Local GPs converge more quickly
than GPRFs of equal block size, but the GPRFs find higher-quality
solutions (Figure \ref{fig:eighty}). After a short initial period, the
best performance always belongs to a GPRF, and at the conclusion of 24 hours the best GPRF
solution achieves mean error 42\% lower than the best local
solution (0.18 vs 0.31).

\subsection{Seismic event location}
\vspace{-0.2cm}
 \begin{figure}
 \centering
  \begin{subfigure}[t]{.50\textwidth}
                  \includegraphics[width=\textwidth, trim=0cm -2cm 0 0cm, clip]{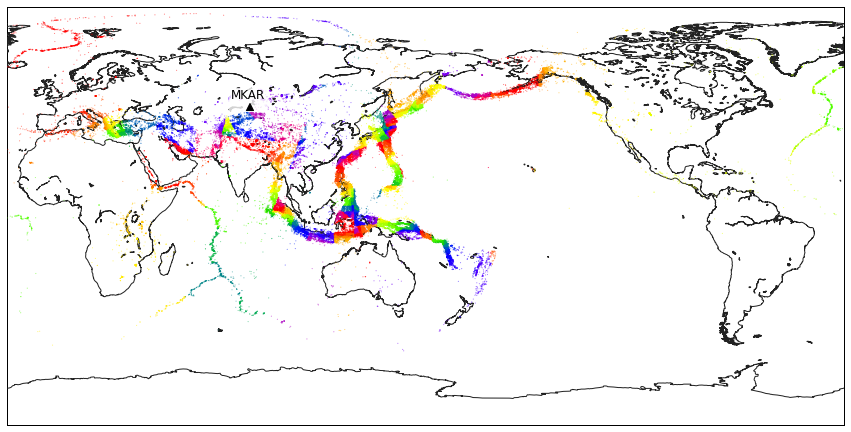}
                  \caption{Event map for seismic dataset.}
                  \label{fig:seismicX}
          \end{subfigure}\hspace{0.5em}
  \begin{subfigure}[t]{.45\textwidth}
                  \includegraphics[width=\textwidth]{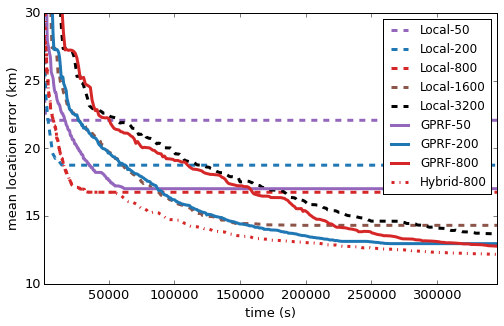}
                  \caption{Mean location error over time.}
                  \label{fig:seismicresults}
          \end{subfigure}
 \caption{Seismic event location task.}
 \label{fig:seismic}
\vspace{-0.5cm}
 \end{figure}

We next consider an application to seismic event location, which formed the
motivation for this work. Seismic waves can be
viewed as high-dimensional vectors generated from an
underlying three-dimension manifold, namely the Earth's
crust. Nearby events tend to generate similar waveforms; we
can model this spatial correlation as a Gaussian process. Prior information regarding the event
locations is available from traditional travel-time-based location systems
\cite{ISCcitation2015}, which produce an independent Gaussian uncertainty ellipse for each event.

A full probability model of seismic waveforms, accounting for background
noise and performing joint alignment of arrival times, is beyond the scope of this
paper. To focus specifically on the ability to approximate GP-LVM inference, we
used real event locations but generated synthetic waveforms  by sampling from a 50-output GP using a Mat\'ern kernel
\cite{rasmussen2006} with $\nu=3/2$ and a lengthscale of 40km. We also
generated observed location estimates $X^\text{obs}$, by corrupting the
true locations with Gaussian noise of standard deviation
20km in each dimension. Given the
observed waveforms and noisy locations, we are interested in recovering the
latitude, longitude, and depth of each event.

Our dataset consists of 107556 events detected at the Mankachi array
station in Kazakstan between 2004 and 2012. Figure~\ref{fig:seismicX}
shows the event locations, colored to reflect a principle axis tree
partition \cite{mcnames2001fast} into blocks of $400$ points (tree
construction time was negligible). The GPRF edge set contains all pairs of
blocks for which any two points had initial locations within one
kernel lengthscale (40km) of each other. We also evaluated
longer-distance connections, but found that this relatively local
edge set had the best performance/time tradeoffs: eliminating edges not
only speeds up each optimization step, but in some cases actually
yielded faster per-step convergence (perhaps because denser edge sets
tended to create large cliques for which the pairwise GPRF objective is
a poor approximation).

Figure \ref{fig:seismicresults} shows the quality of recovered locations as a
function of computation time; we jointly optimized over event locations as
well as two lengthscale parameters (surface distance
and depth) and the noise variance $\sigma^2_n$. Local GPs 
perform quite well on this task, but the best
GPRF achieves 7\% lower mean error than the best local
GP model (12.8km vs 13.7km, respectively) given equal time. An even
better result can be obtained by using the results of a local GP
optimization to initialize a GPRF. Using the same
partition ($m=800$) for both local GPs and the GPRF, this ``hybrid'' method
gives the lowest final error (12.2km), and is dominant across a wide
range of wall clock times, suggesting it as a promising
practical approach for large GP-LVM optimizations. 
 
 \vspace{-0.2cm}
\section{Conclusions and Future Work}
\vspace{-0.2cm}
The Gaussian process random field is a tractable and effective
surrogate for the GP marginal likelihood. It has the flavor of
approximate inference methods such as loopy belief propagation, but
can be analyzed precisely in terms of a deterministic approximation to
the inverse covariance, and provides a new training-time
interpretation of the Bayesian Committee Machine. It is easy to
implement and can be straightforwardly parallelized. 

One direction for future work involves finding partitions for
which a GPRF performs well, e.g., partitions that induce a block near-tree
structure. A perhaps related question is identifying when the GPRF
objective defines a normalizable probability
distribution (beyond the case of an exact tree structure) and under
what circumstances it is a good approximation to the exact GP likelihood.

This evaluation in this paper focuses on spatial data; however, both local
GPs and the BCM have been successfully applied to high-dimensional
regression problems \cite{chalupka2012, deisenroth2015distributed}, so
exploring the effectiveness of the GPRF for dimensionality reduction
tasks would also be interesting. Another useful avenue is to integrate the
GPRF framework with other approximations: since the GPRF and inducing-point methods have complementary strengths -- the GPRF is useful for
modeling a function over a large space, while inducing points are useful
when the density of available data in some region of the space
exceeds what is necessary to represent the function -- an integrated
method might enable new applications for which neither approach alone
would be sufficient. 

\vspace{-0.2cm}
\subsubsection*{Acknowledgements}
\vspace{-0.1cm}
We thank the anonymous reviewers for their helpful 
suggestions.  This work was supported by DTRA grant \#HDTRA-11110026,
and by computing resources donated by Microsoft Research under an Azure for Research grant.

\bibliographystyle{unsrt}
\bibliography{refs}

\appendix
\title{Gaussian Process Random Fields: Supplementary Material}

\author{
David A. Moore and Stuart J. Russell\\
}

\maketitle
 
This file contains additional derivations for our NIPS 2015 paper, ``Gaussian Process Random Fields''. Notation used here follows the notation of the paper. Code to construct the datasets and reproduce the experimental results is available online at \url{https://github.com/davmre/gprf/}.

\section{Block tree structure}

It is straightforward to see that the GPRF objective is exact when the MRF induced by the true precision matrix $J$, with respect to our chosen partition of $\v{y}$, is a tree. For any choice of root node $\v{y}_\text{root}$, the tree structure implies that we can write the true GP distribution as a product of parent-conditional distributions,
\[p(\v{y}) = p(\v{y}_\text{root}) \prod_{i \ne \text{root}} p(\v{y}_i | \v{y}_{\pi(i)})\]
where $\pi(i)$ is the (unique) parent of node $i$ with respect to our chosen root. Then expanding the conditional distribution
\begin{align*}
p(\v{y}) &= p(\v{y}_\text{root}) \prod_{i \ne \text{root}} \frac{p(\v{y}_i , \v{y}_{\pi(i)})}{p(\v{y}_{\pi(i)})}\\
 &= p(\v{y}_\text{root}) \prod_{i \ne \text{root}} p(\v{y}_i) \frac{p(\v{y}_i , \v{y}_{\pi(i)})}{p(\v{y}_i) p(\v{y}_{\pi(i)}) }\\
 &= \left(\prod_{i} p(\v{y}_i) \right) \left(\prod_{i \ne \text{root}} p(\v{y}_i) \frac{p(\v{y}_i , \v{y}_{\pi(i)})}{p(\v{y}_i) p(\v{y}_{\pi(i)}) }\right)
\end{align*}
yields exactly the GPRF objective for the edge set $E={(i, \pi(i)})$, i.e., the edges that define the tree. 

Note that the structure of the MRF induced by the true GP will depend on the partition we choose: a given precision matrix may induce a tree structure for some choices of partition but not for others (e.g., even a fully dense matrix can be viewed as a tree for trivial partitions that split the dataset into only one or two blocks). 

In many cases it is easier to reason about the structure of the covariance matrix than that of the precision matrix. Assuming a stationary kernel, nonzero (or non-negligible) entries of the covariance matrix correspond to data points that are nearby to each other, meaning that the sparsity pattern of the covariance matrix reflects the geometry of the data itself. If the data can be viewed as lying on a treelike manifold -- for example, seismic fault lines, or even trivial special cases such as time series data which lies on the real line -- then for reasonable choices of partition, a graph connecting nearby blocks of data points will have a tree structure. Of course, there is no formal guarantee that this structure will fully carry over to the precision matrix, though intuitively we'd expect that points very distant from each other are also unlikely to interact strongly in the precision matrix. 

\section{Approximation to the true Gaussian}
\label{sec:approx-gaussian}

In this section we prove Theorem 1 from the main text, showing that $q_{GPRF}$ is an unnormalized Gaussian density with a particular precision matrix. 

For any pair of blocks $(i,j)$, define the {\em local precision  matrix} $Q^{(ij)}$ to be the inverse of the marginal covariance,
\[Q^{(ij)} = \left(\begin{array}{cc} Q^{(ij)}_{11} &  Q^{(ij)}_{12}\\
  Q^{(ij)}_{21}  & Q^{(ij)}_{22}\end{array}\right) = \left(\begin{array}{cc} K_{ii} &  K_{ij}\\
  K_{ji}  & K_{jj}\end{array}\right)^{-1},\]
The notation $Q^{(ij)}$ is used to distinguish these local precision
matrices from the blocks $J_{ij}$ of the global precision matrix. Writing $q_{GPRF}$ in terms of unnormalized Gaussian densities,
\begin{align*}
\log q_{GPRF}(\v{y}) &= -\frac{1}{2} \sum_{i=1}^M (1-|E_i|) \v{y}_i^T
K_{ii}^{-1} \v{y}_i -\frac{1}{2}  \sum_{(i,j)\in E} \left(\begin{array}{c}
      \v{y}_i \\ \v{y}_j\end{array}\right)^T Q^{(ij)}\left(\begin{array}{c}
      \v{y}_i \\
      \v{y}_j\end{array}\right) + C\\
&= -\frac{1}{2}\sum_{i=1}^M  \v{y}_i^T \left(K_{ii}^{-1} - |E_i|
  K_{ii}^{-1}\right)\v{y}_i -\frac{1}{2}  \left(\sum_{(i,j)\in E} \v{y}_i^T
Q^{(ij)}_{11} \v{y}_i + 2\v{y}_i^T Q^{(ij)}_{12}\v{y}_j + \v{y}_j^T Q^{(ij)}_{22}\v{y}_j\right) + C\\
&= -\frac{1}{2}\sum_{i=1}^M  \v{y}_i^T \left(K_{ii}^{-1} + \sum_{j\in E_i}
\left(Q^{(ij)}_{11} - K_{ii}^{-1}\right) \right)\v{y}_i - \sum_{(i,j)\in E}
\v{y}_i^T Q^{(ij)}_{12} \v{y}_j + C
\end{align*}
we obtain the standard form of a Gaussian
MRF (expression (3) from the main text) showing that $q_{GPRF}$ does in fact induce a Gaussian density on
$\v{y}$. Note that in passing from the second to the third line we used the fact that $Q^{ij}_{11} = Q^{ji}_{22}$, by definition. This Gaussian representation allows us to read off the implicit precision matrix $\tilde{J}$ in block wise form
\begin{equation}
\tilde{J}_{ii} = K_{ii}^{-1} + \sum_{j\in E_i} \left(Q^{(ij)}_{11}
    - K_{ii}^{-1}\right), \qquad \tilde{J}_{ij} =
  \left\{\begin{array}{ll}Q^{(ij)}_{12} & \text{ if } (i,j) \in E\\0
    & \text{ otherwise.}\end{array}\right.\label{eqn:approx-precision}
\end{equation}
We see that the off-diagonal blocks of the precision matrix are simply
the corresponding blocks of the pairwise local precisions. Each
diagonal block, by contrast, combines the inverse of the local
covariance matrix with corrections from the pairwise
precisions. Note that the approximate precision $\tilde{J}$ may not be positive definite. In this case $q_{GPRF}$ is not a normalizable density,
although it is still ``approximately normalized'' in the sense of the next section. 

\section{Approximate normalization}
\label{sec:approx-norm}

In this section we prove Theorem 2 from the main text:

\setcounter{theorem}{1}
\begin{theorem}
The objective $q_{GPRF}$ is approximately normalized in the sense that
the optimal value of the {\em Bethe free energy} \cite{yedidia2001bethe}, 
\begin{equation}
F_B(b) = \sum_{i\in V} \left(\int_{\v{y}_i} b_i(\v{y}_i) \frac{(1-|E_i|)\ln
    b(\v{y}_i)}{\ln \psi_i(\v{y}_i)}\right)  + \sum_{(i,j)\in E} \left(\int_{\v{y}_i, \v{y}_j} b_{ij}(\v{y}_i,
  \v{y}_j) \ln \frac{b_{ij}(\v{y}_i,
  \v{y}_j)}{\psi_{ij}(\v{y}_i, \v{y}_j))}\right)
\label{eqn:bethe-energy} \approx \log Z,
\end{equation}
the approximation to the normalizing constant found by loopy belief
propagation, is precisely zero. Furthermore, this
optimum is obtained when the pseudomarginals $b_i, b_{ij}$ are
taken to be the true GP marginals $p_i, p_{ij}$. 
\end{theorem}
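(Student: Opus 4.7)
The plan is to view $q_{GPRF}$ as an overlapping-marginals (``Bethe'') product and then exhibit the true GP marginals as a belief-propagation fixed point whose Bethe free energy vanishes by direct cancellation.

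First I would rewrite the GPRF factorization $q_{GPRF}(\v{y}) = \prod_i p(\v{y}_i)^{1-|E_i|} \prod_{(i,j)\in E} p(\v{y}_i,\v{y}_j)$ in standard Bethe form, identifying the node and edge factors as $\psi_i(\v{y}_i) = p(\v{y}_i)$ and $\psi_{ij}(\v{y}_i,\v{y}_j) = p(\v{y}_i,\v{y}_j)$. Under this identification the $(1-|E_i|)$ exponents on $p(\v{y}_i)$ appear naturally as the usual Bethe counting numbers weighting the node terms in (\ref{eqn:bethe-energy}), rather than as part of $\psi_i$ itself.

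Next I would take the candidate pseudomarginals to be precisely the true GP marginals, $b_i(\v{y}_i) = p(\v{y}_i)$ and $b_{ij}(\v{y}_i,\v{y}_j) = p(\v{y}_i,\v{y}_j)$. These automatically satisfy every Bethe admissibility condition: each is normalized as a marginal of a genuine Gaussian, and local consistency $\int b_{ij}(\v{y}_i,\v{y}_j)\,d\v{y}_j = b_i(\v{y}_i)$ follows from the fact that true marginals of a joint distribution are consistent. Substituting into (\ref{eqn:bethe-energy}) reduces every integrand to $\ln(b/\psi) = \ln(p/p) = 0$, so both sums vanish termwise and $F_B = 0$.

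The remaining step is to argue that this pair of pseudomarginals is actually an optimum, i.e.\ a stationary point of $F_B$ subject to the marginalization and normalization constraints, rather than merely an arbitrary feasible point. Using the standard loopy-BP message equations for a pairwise MRF, I would verify that messages $m_{j\to i}(\v{y}_i) \propto p(\v{y}_i)$ reproduce the beliefs $b_i = p_i$ and $b_{ij} = p_{ij}$ and are self-consistent under the BP update rule; this reduces to the identity $\int p(\v{y}_i,\v{y}_j)\,d\v{y}_j = p(\v{y}_i)$. Since BP fixed points are in correspondence with constrained stationary points of the Bethe free energy, the true GP marginals achieve a BP optimum with $F_B = 0$. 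The main subtlety lies here: on loopy block graphs $F_B$ is non-convex and may admit multiple BP fixed points, so ``the optimum'' should be read as ``a stationary value attained by loopy BP''; the calculational core of the theorem is immediate after the substitution, and the only real work is the bookkeeping to confirm BP self-consistency on an arbitrary partition and edge set.
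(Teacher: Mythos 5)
Your proposal is correct and its core — substituting the GPRF potentials into the Bethe free energy and evaluating at the true GP marginals $b_i=p_i$, $b_{ij}=p_{ij}$ to get zero termwise — is exactly the paper's calculation. The difference is in how optimality is established. The paper observes that after substitution the Bethe free energy collapses, for \emph{arbitrary} pseudomarginals $b$, into $F_B(b)=\sum_i KL[b_i\|p_i]+\sum_{(i,j)\in E}KL[b_{ij}\|p_{ij}]$, a sum of Kullback--Leibler divergences. Since each term is nonnegative and vanishes iff $b=p$, and the true marginals are locally consistent and hence feasible, zero is immediately the \emph{global constrained minimum}, not merely a stationary value. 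You instead verify that the true marginals form a loopy-BP fixed point and invoke the correspondence between BP fixed points and constrained stationary points of $F_B$; this is more work and yields a weaker conclusion, which you correctly flag when you hedge that ``the optimum'' must be read as ``a stationary value attained by loopy BP.'' That hedge is unnecessary: the KL decomposition you are one step away from writing down shows $F_B(b)\ge 0$ everywhere, so the stationary point you exhibit is in fact the minimum and the theorem holds as stated. In short, replace your BP-fixed-point bookkeeping with the observation that the substituted objective is a sum of KL divergences, and your argument becomes the paper's.
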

\begin{proof}
These claims are established rather directly by substituting the GPRF potentials $\psi_i^{GPRF}, \psi_{ij}^{GPRF}$ for the log pseudomarginals $\log \psi_{i}, \log \psi_{ij}$ in (\ref{eqn:bethe-energy}), yielding
\begin{align}
 F_B(b)&= \sum_{i\in V} \left(\int_{\v{y}_i} b_i(\v{y}_i) \frac{(1-|E_i|)\ln
    b_i(\v{y}_i)}{(1-|E_i|) \ln p(\v{y}_i) }\right)  + \sum_{(i,j)\in E} \left(\int_{\v{y}_i, \v{y}_j} b_{ij}(\v{y}_i,
  \v{y}_j) \ln \frac{b_{ij}(\v{y}_i,
  \v{y}_j)}{p(\v{y}_i, \v{y}_j))}\right) \nonumber\\
&= \sum_{i\in V} KL[b_i \| p_i] + \sum_{(i,j)\in E} KL[b_{ij}\| p_{ij}],
\end{align}
where $KL[b\|p] = \int b(\v{x}) \ln \frac{b(\v{x})}{p(\v{x})}d\v{x}$
is the Kullback-Liebler divergence between distributions $b$ and
$p$.  This is minimized when the distributions are equal, at which
point the divergence is zero. Thus, taking $b_i=p_i$and
$b_{ij}=p_{ij}$ yields the optimal value $F_B=0$. 
\end{proof}

We might have hoped that, as local GPs match the marginal
distributions of the full GP on individual blocks, perhaps a
higher-order approximation could match the exact marginals on pairs of
blocks. This is not possible, since any Gaussian distribution whose
pairwise marginals match the full GP must in fact be the full GP
(Gaussians are entirely characterized by their covariances). Instead we
can view $q_{GPRF}$ as {\em approximately} matching the pairwise
marginals of the full GP, in the sense that the pseudomarginals found
by running loopy belief propagation on $q_{GPRF}$ are in fact the true
marginals of the full GP. This is a consequence of the fact that loopy
BP converges to stationary points of the Bethe energy
\cite{yedidia2001bethe}.

\end{document}